\title[Aggregating Strategies for Long-term Forecasting]{Aggregating Strategies for Long-term Forecasting}
 \author{\Name{Alexander Korotin} \Email{a.korotin@skoltech.ru}\\
 \addr Skolkovo Institute of Science and Technology,\\Nobel street, 3, Moscow, Moskovskaya oblast', Russia
 \AND 
 \Name{Vladimir V'yugin\nametag{\thanks{Vladimir V'yugin was supported by the Russian Science Foundation grant (project 14-50-00150).}}} \Email{vyugin@iitp.ru}\\
 \addr Institute for Information Transmission Problems,\\Bolshoy Karetny per. 19, build.1, Moscow, Russia
 \AND
 \Name{Evgeny Burnaev\nametag{\thanks{Evgeny Burnaev was supported by the Ministry of Education and Science of Russian
Federation, grant No. 14.606.21.0004, grant code: RFMEFI60617X0004.}}} \Email{e.burnaev@skoltech.ru}\\
 \addr Skolkovo Institute of Science and Technology,\\Nobel street, 3, Moscow, Moskovskaya oblast', Russia
}
\DeclareMathOperator*{\argmin}{arg\,min}
\begin{document}

\maketitle

\begin{abstract}
The article is devoted to investigating the application of aggregating algorithms to the problem of the long-term forecasting. We examine the classic aggregating algorithms based on the exponential reweighing. For the general Vovk's aggregating algorithm we provide its generalization for the long-term forecasting. For the special basic case of Vovk's algorithm we provide its two modifications for the long-term forecasting. The first one is theoretically close to an optimal algorithm and is based on replication of independent copies. It provides the time-independent regret bound with respect to the best expert in the pool. The second one is not optimal but is more practical and has $O(\sqrt{T})$ regret bound, where $T$ is the length of the game.
\end{abstract}
\begin{keywords}
aggregating algorithm, long-term forecasting, prediction with experts' advice, delayed feedback.
\end{keywords}

\section{Introduction}

We consider the online game of prediction with experts' advice. A master (aggregating) algorithm at every step $t=1,\dots,T$ of the game has to combine aggregated prediction from predictions of a finite pool of $N$ experts (see e.g. \cite{LiW94}, \cite{FrS97}, \cite{Vov90}, \cite{VoV98}, \cite{cesa-bianchi}, \cite{AdaAda16} among others). We investigate the adversarial case, that is, no assumptions are made about the nature of the data (stochastic, deterministic, etc.).

In the classical online scenario, all predictions at step $t$ are made for the next step $t+1$. The true outcome is revealed immediately at the beginning of the next step of the game and the algorithm suffers loss using a loss function. 

In contrast to the classical scenario, we consider the long-term forecasting. At each step $t$ of the game, the predictions are made for some pre-determined point $t+D$ ahead (where $D\geq 1$ is some fixed known horizon), and the true outcome is revealed only at step $t+D$.

The performance of the aggregating algorithm is measured by the regret over the entire game. The regret $R_T$ is the difference between the cumulative loss of the online aggregating algorithm and the loss of some offline comparator. A typical offline comparator is the best fixed expert in the pool or the best fixed convex linear combination of experts. The goal of an aggregating algorithm is to minimize the regret, that is, $R_{T}\rightarrow \min$. 

It turns out that there exists a wide range of aggregating algorithms for the classic scenario ($D=1$). The majority of them are based on the exponential reweighing methods (see \cite{LiW94}, \cite{FrS97}, \cite{Vov90}, \cite{VoV98}, \cite{cesa-bianchi}, \cite{AdaAda16}, etc.). At the same time, several algorithms come from the general theory of online convex optimization by \cite{HazanOCO16}. Such algorithms are based on online gradient descent methods.

There is no right answer to the question which category of algorithms is better in practice. Algorithms from both groups theoretically have good performance. Regret (with respect to some offline comparator) is usually bounded by a sublinear function of $T$: 
\begin{enumerate}
\item $R_{T}\leq O(T)$ for Fixed Share with constant share by \cite{HeW98};
\item $R_{T}\leq O(\sqrt{T})$ for Regularized Follow The Leader according to \cite{HazanOCO16};
\item $R_{T}\leq O(\ln T)$ for Fixed Share with decreasing share by \cite{AdaAda16};
\item $R_{T}\leq O(1)$ in aggregating algorithm by \cite{VoV98};
\end{enumerate}
and so on. In fact, the applicability of every particular algorithm and the regret bound depends on the properties of the loss function (convexity, Lipschitz, exponential concavity, mixability, etc.).

When it comes to long-term forecasting, many of these algorithms do not have theoretical guaranties of performance or even do not have a version for the long-term forecasting. Long-term forecasting implies delayed feedback. Thus, the problem of modifying the algorithms for the long-term forecasting can be partially solved by the general results of the theory of forecasting with the delayed feedback.

The main idea in the field of the forecasting with the delayed feedback belongs to \cite{WeO2002}. They studied the simple case of binary sequences prediction under fixed known delay feedback $D$. According to their results, an optimal\footnote{An optimal predictor is any predictor with the regret which is less or equal to the minimax regret. There may exist more than one optimal predictor.} predictor $p^{*}_{D}(x_{t+D}|x_{t},\dots, x_{1})$ for the delay $D$ can be obtained from an optimal predictor $p^{*}_{1}(x_{t+1}|x_{t},\dots,x_{1})$ for the delay $1$. The method implies running $D$ independent copies of predictor $p_{1}^{*}$ on $D$ disjoint time grids
${GR_{d}=\{t\mbox{ }|\mbox{ }t\equiv d\mbox{ }(\mbox{mod } D)\}}$ for $1{\le d\le D}$. Thus, 
$${p^{*}_{D}(x_{t+D}|x_{t},\dots, x_{1})=p_{1}^{*}(x_{t+D}|x_{t},x_{t-D},x_{t-2D},\dots)}$$
for all $t$. We illustrate the optimal partition of the timeline in Figure \ref{figure:grids}.

\begin{figure}[!htb]
\centering
\includegraphics[scale=0.63]{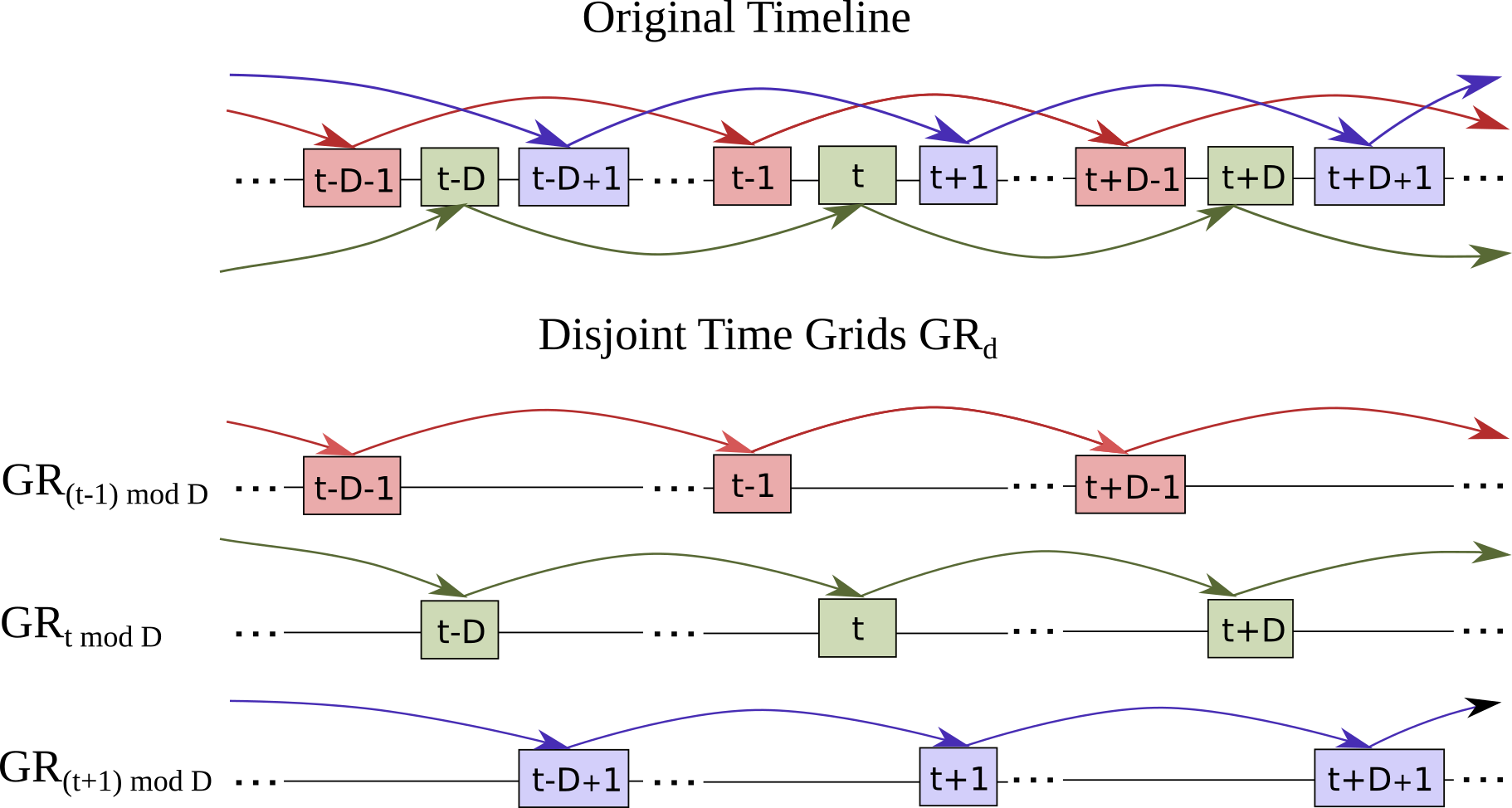}
\caption{The optimal approach to the problem of forecasting with the fixed known delay $D$. The timeline is partitioned into $D$ disjoint grids $GR_{d}$. Games on different grids are considered separately. Each game has fixed known delay $1$ (not $D$).}
\label{figure:grids}
\end{figure}

It turns out, their result also works for the general problem of forecasting under the fixed known delay feedback, in particular, for the prediction with expert advice (we prove this in Appendix \ref{optimal-approach}). Thus, it is easy to apply any $1$-step-ahead forecasting aggregating algorithm to the problem of long-term forecasting by running its $D$ independent copies on $D$ disjoint grids $GR_{d}$ (for $d=1,\dots, D$). We call algorithms obtained by this method as \textit{replicated algorithms}.

Nevertheless, one may say that such a theoretically optimal approach is practically far from optimal because it uses only $\frac{1}{D}$ of observed data at every step of the game. Moreover, separate learning processes on grids $GR_{d}$ (for $d=1,\dots,D$) do not even interact.

Gradient-descent-based aggregating algorithms have several non-replicated adaptations for the long-term forecasting. The most obvious adaptation is the delayed gradient descent by \cite{QuanDGD15}. Also, the problem of prediction with experts' advice can be considered as a special case of online convex optimization with memory by \cite{AHS2015}. Both approaches provide $R_{T}\leq O(\sqrt{T})$ classical regret bound.\footnote{We do not include the value of the forecasting horizon $D$ in regret bounds because in this article we are interested only in the regret asymptotic behavior w.r.t. $T$. In all algorithms that we discuss the asymptotic behavior w.r.t. $D$ is sublinear or linear.} Thus, the practical and theoretical problem of modifying the gradient descent based aggregating algorithms for long-term forecasting can be considered as solved.

In this work, we investigate the problem of modifying aggregating algorithms based on exponential reweighing for the long-term forecasting. We consider the general aggregating algorithm by \cite{Vovk1999} for the $1$-step-ahead forecasting and provide its reasonable non-replicated generalization for the $D$-th-step-ahead forecasting. These algorithms are denoted by $G_{1}$ and $G_{D}$ respectively. We obtain a general expression for the regret bound of $G_{D}$. As an important special case, we consider the classical exponentially reweighing algorithm $V_{1}$ by \cite{VoV98}, that is a case of $G_{1}$, designed to compete with the best expert in the pool. The algorithm $V_{1}$ can be considered close to an optimal one because it provides constant $T$-independent regret bound. We provide its replicated modification $V_{D}$ for the long-term forecasting. We also propose a non-replicated modification $V_{D}^{FC}$ for the long-term forecasting of $V_{1}$ (motivated by reasonable practical approach).\footnote{FC --- full connection.} Our main result here is that the regret bound for $V_{D}^{FC}$ is $O(\sqrt{T})$.

All the algorithms that we develop and investigate require the loss function to be exponentially concave. This is a common assumption (see e.g. \cite{KiW99}) for the algorithms based on the exponential reweighing.\footnote{Usually, even more general assumption is used that the loss function is mixable.}

\vspace{2mm}

\noindent \textbf{The main contributions of this article are the following:}
\begin{enumerate}
\item Developing the general non-replicated exponentially reweighing aggregating algorithm $G_{D}$ for the problem of long-term prediction with experts' advice and estimating its regret.
\item Developing the non-replicated adaptation $V_{D}^{FC}$ of the powerful aggregating algorithm $V_{1}$ by \cite{VoV98}. The obtained algorithm has $O(\sqrt{T})$ regret bound with respect to the best expert in the pool.
\end{enumerate}

In our previous work (see \cite{KVB2018}) we also studied the application of algorithm $V_{1}$ to the long-term forecasting. We applied the method of Mixing Past Posteriors by \cite{BoW2002} to connect the independent learning processes on separate grids $GR_{d}$ (for $d=1,\dots,D$). We obtained the algorithm $V_{D}^{GC}$ that \textit{partially connects} the learning processes on these grids.\footnote{GC --- grid connection.}

In contrast to our previous work, in this article we consider the general probabilistic framework for the long-term forecasting (algorithms $G_{1}$ and $G_{D}$). We obtain the algorithm $V_{D}^{FC}$ that \textit{fully connects} the learning processes on different grids, see details in Subsection \ref{practical-vovk}. 

\vspace{2mm}

\noindent The article is structured as follows.

In Section \ref{prelim} we set up the problem of long-term prediction with experts' advice and state the protocol of the online game.

In Section \ref{1-step-ahead} we discuss the aggregating algorithms for the $1$-step-ahead forecasting. In Subsection \ref{general-model} we describe the general model $G_{1}$ by \cite{Vovk1999} and consider its special case $V_{1}$ in Subsection \ref{vovk-classic}.

In Section \ref{d-step-ahead} we discuss aggregating algorithms for the $D$-th-step-ahead forecasting: we develop general model $G_{D}$ in Subsection \ref{general-model-d}. Then we discuss its two special cases: algorithm $V_{D}$ in Subsection \ref{vovk-optimal} that is a replicated version of $V_{1}$ and our non-replicated version $V_{D}^{FS}$ in Subsection \ref{practical-vovk}. We prove the $O(\sqrt{T})$ regret bound for $V_{D}^{FC}$.

In Appendix \ref{optimal-approach} we generalize the result by \cite{WeO2002} to the case of long-term prediction with experts' advice and prove that the approach with replicating $1$-step-ahead predictors for $D$-th-step-ahead forecasting is optimal.

\section{Preliminaries}
\label{prelim}

We use \textbf{bold} font to denote vectors (e.g. $\bm{w}\in \mathbb{R}^{M}$ for some integer $M$). In most cases, superscript refers to index/coordinate of an element in the vector (e.g. $(w^{1},\dots,w^{N})=\bm{w}$). Subscript is always used to indicate time (e.g. $l_{t}, R_{T}, \omega_{\tau}, w_{t}^{n}$, etc.).

For any integer $M$ we denote the probability simplex of dimension $M$ by 
$${\Delta_{M}=\{\bm{p}\mbox{ such that }(\bm{p}\in \mathbb{R}_{+}^{M})\wedge(\|\bm{p}\|_{1}=1)\wedge(\bm{p}>0)\}}.$$

We use the notation $\bm{e}\in \mathbb{R}_{+}^{M}$ to denote the unit vector $(1,1,\dots,1)$. The dimension $M$ of the vector is always clear from the context. Note that $\frac{\bm{e}}{M}\in\Delta_{M}$.

The words \textbf{prediction} and \textbf{forecasting} are absolute synonyms in this paper.

\subsection{A Game of Long-Term Forecasting with Experts' Advice}
\label{ltf-pea}
We consider the online game of $D$-th-step-ahead forecasting of time series $\omega_{t}\in\Omega$  by aggregating a finite pool of $N$ forecasting experts. We use $\mathcal{N}=\{1,\dots,N\}$ to denote the pool and $n\in\mathcal{N}$ as an index of an expert.

At each integer step $t=1,2,\dots,T-D$ experts $ n\in\mathcal{N}$ present their forecasts $\xi_{t+D}^{ n}\in \Xi$ of time series $\{\omega_{\tau}\}_{\tau=1}^{T}$ for the time moment $t+D$. The master (aggregating) algorithm combines these forecasts into a single (aggregated) forecast ${\gamma_{t+D}\in\Gamma\subset\Xi}$ for the time moment $t+D$. 

After the corresponding outcome $\omega_{t+D}$ is revealed (on the step $t+D$ of the game), both experts and algorithm suffer losses using a loss function ${\lambda:\Omega\times\Xi\rightarrow \mathbb{R}_{+}}$. We denote the loss of expert $ n\in\mathcal{N}$ on step $t+D$ by ${l_{t+D}^{ n}=\lambda(\omega_{t+D},\xi_{t+D}^{ n})}$ and the loss of the aggregating algorithm by ${h_{t+D}=\lambda(\omega_{t+D},\gamma_{t+D})}$. We fix the protocol of the game below.

 \bigskip
 
{\bf Protocol } ($D$-th-step-ahead forecasting with Experts' advice)
{\small
\medskip\hrule\hrule\medskip
\noindent \hspace{2mm}Get the experts $ n\in\mathcal{N}$ predictions $\xi_{t}^{ n}\in\Xi$  for steps $t=1,\dots,D$.

\noindent \hspace{2mm}Compute the aggregated predictions $\gamma_{t}\in\Gamma$  for steps $t=1,\dots,D$.

\smallskip
\noindent \hspace{2mm}{\bf FOR} $t=1,\dots ,T$
\begin{enumerate}
\item Observe the true outcome $\omega_{t}\in \Omega$.
\item Suffer losses from past predictions
\begin{enumerate}

\item Compute the losses ${l_{t}^{ n}=\lambda(\omega_{t},\xi_{t}^{ n})}$ for all $ n\in\mathcal{N}$ of the experts' forecasts $\xi_{t}^{ n}$ made at the step $t-D$.
\item Compute the loss $h_{t}=\lambda(\omega_{t},\gamma_{t})$ of the aggregating algorithm's forecast $\gamma_{t}$ made at the step $t-D$.
\end{enumerate}
\item Make the forecast for the next step (if $t\leq T-1$)
\begin{enumerate}
\item Get the experts $ n\in\mathcal{N}$ predictions $\xi_{t+D}^{ n}\in\Xi$  for the step $t+D$.
\item Compute the aggregated prediction $\gamma_{t+D}\in \Gamma$ of the algorithm.
\end{enumerate}
\end{enumerate}

\noindent \hspace{2mm}{\bf ENDFOR}
\medskip\hrule\hrule\medskip
}
\smallskip

We assume that the forecasts $\xi_{t}^{n}$ of all experts $n\in\mathcal{N}$ for first $D$ time moments $t=1,\dots,D$ are given before the game.

The variables ${L_{T}^{ n}=\sum_{t=1}^{T}l_{t}^{ n}}$ (for all ${ n\in\mathcal{N}}$) and ${H_{T}=\sum_{t=1}^{T}h_{t}}$ correspond to the cumulative losses of expert $ n$ and the aggregating algorithm over the entire game respectively. We also denote the vector of experts' forecasts for the step $t+D$ by $\bm{\xi}_{t+D}=(\xi_{t+D}^{1},\dots,\xi_{t+D}^{N})$.

In the general protocol sets $\Xi$ and $\Gamma\subset\Xi$ may not be equal. For example, the problem of combining $N$ soft classifiers into a hard one has $\Xi=[0,1]$ and $\Gamma=\{0, 1\}\subsetneq \Xi$. In this article we assume that the sets of possible experts' and algorithm's forecasts are equal, that is $\Xi=\Gamma$. Moreover, we assume that $\Xi=\Gamma$ is a convex set. We will not use the notation of $\Xi$ anymore.

The performance of the algorithm is measured by the (cumulative) regret. The cumulative regret is the difference between the cumulative loss of the aggregating algorithm and the cumulative loss of some off-line comparator. A typical approach is to compete with the best expert in the pool. The cumulative regret with respect to the best expert is
\begin{equation}
R_{T}=H_{T}-\min_{ n\in\mathcal{N}}L_{T}^{ n}.
\label{base-regret}
\end{equation}

The goal of the aggregating algorithm is to minimize the regret, that is, ${R_{T}\rightarrow \min}$. In order to theoretically guarantee algorithm's performance, some upper bound is usually proved for the cumulative regret ${R_{T}\leq f(T)}$. 

In the base setting \eqref{base-regret}, sub-linear upper bound $f(T)$ for the regret leads to asymptotic performance of the algorithm equal to the performance of the best expert. More precisely, we have $\lim_{T\rightarrow \infty}\frac{R_{T}}{T}=0$.

\subsection{Exponentially concave loss functions}

We investigate learning with exponentially concave loss functions. Loss function ${\lambda:\Omega\times\Gamma\rightarrow \mathbb{R}_{+}}$ is called $\eta$-exponentially concave (for some $\eta>0$) if for all $\omega\in\Omega$ and all probability distributions $\bm{\pi}$ on set $\Gamma$ the following holds true:

\begin{equation}
e^{-\eta\lambda(\omega,\gamma_{\bm{\pi}})}\geq \int_{\gamma\in\Gamma}e^{-\eta\lambda(\omega, \gamma)}\bm{\pi}(d\gamma),
\label{exp-concavity}
\end{equation}
where 
\begin{equation}\gamma_{\bm{\pi}}=\int_{\gamma\in\Gamma}\gamma  \bm{\pi}(d\gamma)=\mathbb{E}_{\bm{\pi}}\gamma.
\label{wa-prediction}
\end{equation}
In \eqref{exp-concavity} variable $\gamma_{\bm{\pi}}$ is called aggregated prediction. Since $\Gamma$ is convex, we have $\gamma_{\bm{\pi}}\in\Gamma$.

If a loss function is $\eta$-exponentially concave, it is also $\eta'$-exponentially concave for all $\eta'\in (0, \eta]$. This fact immediately follows from the general properties of exponentially concave functions. For more details see the book by \cite{HazanOCO16}.

Note that the basic square loss and the log loss functions are both exponentially concave. This fact is proved by \cite{KiW99}.

\section{Aggregating Algorithm for 1-Step-Ahead Forecasting}
\label{1-step-ahead}

In this section we discuss basic aggregating algorithms for $1$-step-ahead forecasting based on exponential reweighing. Our framework is built on the general aggregating algorithm $G_{1}$ by \cite{Vovk1999}, we discuss it in Subsection \ref{general-model}. The simplest and earliest version $V_{1}$ by \cite{VoV98} of this algorithm is discussed in Subsection \ref{vovk-classic}.

\subsection{General Model}
\label{general-model}

We investigate the adversarial case, that is, no assumptions (functional, stochastic, etc) are made about the nature of data and experts. However, it turns out that in this case it is convenient to develop algorithms using some probabilistic interpretations. 

Recall that $\bm{\xi}_{t}=(\xi_{t}^{1},\dots,\xi_{t}^{N})$ is a vector of experts' predictions of $\omega_{t}$. Loss function $\lambda:\Omega\times\Gamma\rightarrow \mathbb{R}_{+}$ is $\eta$-exponentially concave for some $\eta>0$.

We assume that data is generated using some probabilistic model with hidden states. The model is shown in Figure \ref{figure:model-general}.

\begin{figure}[!htb]
\begin{center}
\includegraphics[scale=0.7]{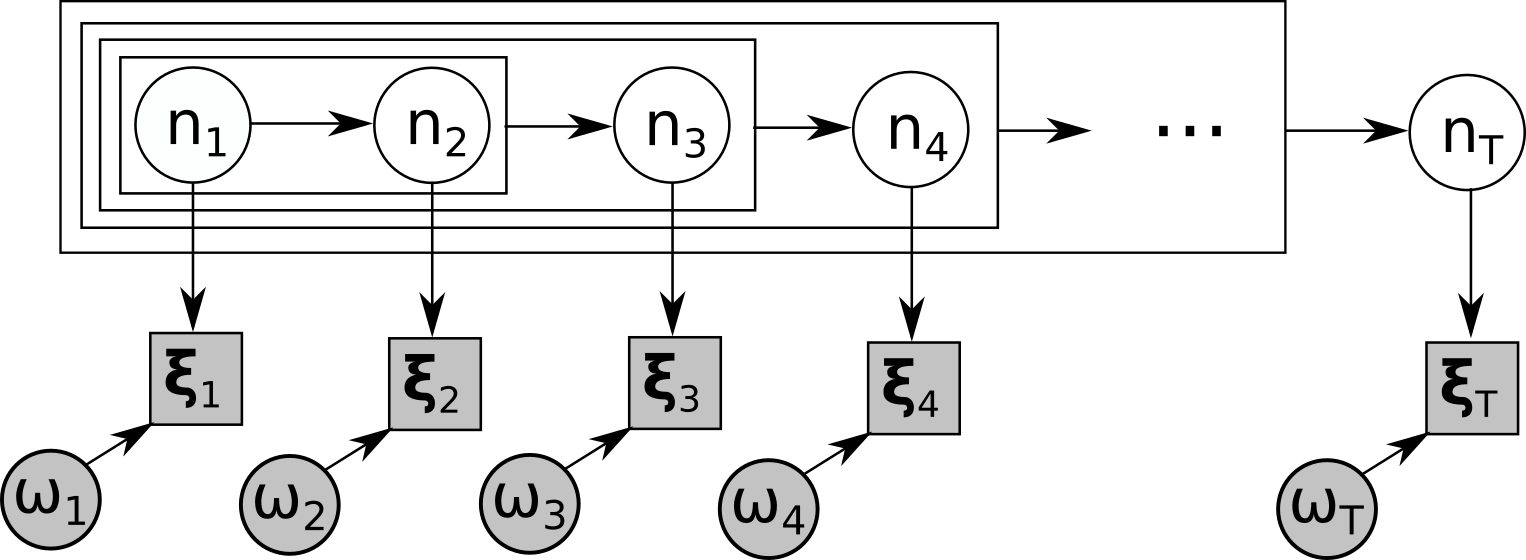}
\end{center}
\caption{Probabilistic model of data generation process.}
\label{figure:model-general}
\end{figure} 

We suppose that there is some hidden sequence of experts $ n_{t}\in \mathcal{N}$ (for $t=1,2,\dots,T$) that generates the experts predictions $\bm{\xi}_{t}$. Particular, hidden expert $n_{t}$ at step $t$ is called active expert. The conditional probability to observe the vector $\bm{\xi}_{t}$ of experts' predictions at step $t$ is

$$p(\bm{\xi}_{t}| n_{t})=\frac{e^{-\eta \lambda(\omega_{t},\xi_{t}^{ n_{t}})}}{Z_{t}},$$
where $Z_{t}=\int_{\xi\in\Gamma}e^{-\eta\lambda(\omega_{t},\xi)}d\xi$ is the normalizing constant.\footnote{Constant $Z_{t}$ is $ n_{t}$-independent. In the article we do not need to compute the exact value of the normalizing constant $Z_{t}$.} We denote $\bm{\Xi}_{t}=(\bm{\xi}_{1},\dots,\bm{\xi}_{t})$ and $\Omega_{t}=(\omega_{1},\dots,\omega_{t})$ for all $t=1,\dots, T$.

For the first active expert $ n_{1}$ some known prior distribution is given $p( n_{1})=p_{0}( n_{1})$. The sequence $( n_{1},\dots, n_{T})$ of active experts is generated step-by-step. For $t\in\{1,\dots,T-1\}$ each $ n_{t+1}$ is sampled from some known distribution $p( n_{t+1}|\mathcal{N}_{t})$, where $\mathcal{N}_{t}=( n_{1},\dots, n_{t})$.\footnote{In case $p( n_{t+1}|\mathcal{N}_{t})=p( n_{t+1}| n_{t})$, we obtain traditional Hidden Markov Process. The hidden state at step $t+1$ depends only on the previous hidden state at step $t$.} Thus, active expert $ n_{t+1}$ depends on the previous experts $\mathcal{N}_{t}$.

The considered probabilistic model is:\footnote{The correct way is to include the time series values $\omega_{t}$ as the conditional parameter in the model probabilistic distribution, that is, $p(\mathcal{N}_{T},\bm{\Xi}_{T}|\Omega_{T})$. We omit the values $\omega_{t}$ in probabilities $p(\cdot)$ in order not to overburden the notation.}

\begin{equation}
p(\mathcal{N}_{T},\bm{\Xi}_{T})=p(\mathcal{N}_{T})\cdot p(\bm{\Xi}_{T}|\mathcal{N}_{T})=\bigg[p_{0}( n_{1})\prod_{t=2}^{T}p( n_{t}|\mathcal{N}_{t-1})\bigg]\cdot \bigg[\prod_{t=1}^{T}p(\bm{\xi}_{t}| n_{t})\bigg].
\label{model-general}
\end{equation}

The similar equation holds true for $t\leq T$:

$$
p(\mathcal{N}_{t},\bm{\Xi}_{t})=p(\mathcal{N}_{t})\cdot p(\bm{\Xi}_{t}|\mathcal{N}_{t})=\bigg[p_{0}( n_{1})\prod_{\tau=2}^{t}p(n_{\tau}|\mathcal{N}_{\tau-1})\bigg]\cdot \bigg[\prod_{\tau=1}^{t}p(\bm{\xi}_{\tau}| n_{\tau})\bigg].
$$

The probability $p(\mathcal{N}_{T})$ is that of hidden states (active experts).\footnote{The form 
$p(\mathcal{N}_{T})=p_{0}( n_{1})\prod_{t=2}^{t}p( n_{t}|\mathcal{N}_{t-1})$ is used only for convenience and association with online scenario. It does not impose any restrictions on the type of probability distribution. In fact, $p(\mathcal{N}_{T})$ may be any distribution on $\mathcal{N}^{T}$ of any form.}

Suppose that the current time moment is $t$. We observe the experts' predictions $\bm{\Xi}_{t}$ made earlier, time series $\Omega_{t}$ and predictions $\bm{\xi}_{t+1}$ for the step $t+1$. Since we observe $\Omega_{t}$ and $\bm{\Xi}_{t}$, we are able to estimate the conditional distribution $p(\mathcal{N}_{t}|\bm{\Xi}_{t})$ of hidden variables $\mathcal{N}_{t}$. This estimate allows us to compute the conditional distribution on the active expert $n_{t+1}$ at the moment $t+1$. We denote for all $ n_{t+1}\in\mathcal{N}$

\begin{equation}
w_{t+1}^{ n_{t+1}}=p( n_{t+1}|\bm{\Xi}_{t})=\sum_{\mathcal{N}_{t}\in\mathcal{N}^{t}}p(n_{t+1}|\mathcal{N}_{t})p(\mathcal{N}_{t}|\bm{\Xi}_{t}).
\label{general-update}
\end{equation}

We use the weight vector $\bm{w}_{t+1}=(w_{t+1}^{1},\dots,w_{t+1}^{N})$ to combine the aggregated prediction for the step $t+1$:

$$\gamma_{t+1}=\sum_{n_{t+1}\in\mathcal{N}}\xi_{t+1}^{n_{t+1}}p( n_{t+1}|\bm{\Xi}_{t})=\langle \bm{w}_{t+1}, \bm{\xi}_{t+1}\rangle=\sum_{n=1}^{N}w_{t+1}^{n} \xi_{t+1}^{n}.$$

The aggregating algorithm is shown below. We denote it by $G_{1}=G_{1}(p)$, where $p$ indicates the probability distribution $p(\mathcal{N}_{T})$ of active experts to which the algorithm is applied.

 \bigskip
 
{\bf Algorithm $G_{1}(p)$ } (Aggregating algorithm for distribution $p$ of active experts)
{\small
\medskip\hrule\hrule\medskip
\noindent \hspace{2mm}Set initial prediction weights $\bm{w}_{1}^{ n_{1}}=p_{0}( n_{1})$.

\noindent \hspace{2mm}Get the experts $ n\in\mathcal{N}$ predictions $\xi_{1}^{n}\in\Gamma$  for the step $t=1$.

\noindent \hspace{2mm}Compute the aggregated prediction $\gamma_{1}=\langle\bm{w}_{1},\bm{\xi}_{1}\rangle$ for the step $t=1$.

\smallskip
\noindent \hspace{2mm}{\bf FOR} $t=1,\dots ,T$
\begin{enumerate}
\item Observe the true outcome $\omega_{t}\in \Omega$.
\item Update the weights
\begin{enumerate}

\item Calculate the prediction weights $\bm{w}_{t+1}=(w_{t}^{1},\dots, w_{t}^{N})$, where 
$$w_{t}^{n_{t+1}}=p(n_{t+1}|\bm{\Xi}_{t})$$
for all $n_{t+1}\in\mathcal{N}$.
\end{enumerate}
\item Make forecast for the next step (if $t\leq T-1$)
\begin{enumerate}
\item Get the experts $ n\in\mathcal{N}$ predictions $\xi_{t+1}^{ n}\in\Gamma$  for the step $t+1$.
\item Combine the aggregated prediction $\gamma_{t+1}=\langle \bm{w}_{t+1}, \bm{\xi}_{t+1}\rangle\in \Gamma$ of the algorithm.
\end{enumerate}
\end{enumerate}

\noindent \hspace{2mm}{\bf ENDFOR}
\medskip\hrule\hrule\medskip
}
\smallskip

To estimate the performance of the obtained algorithm, we prove Theorem \ref{theorem-main}. Recall that $H_{T}$ is the cumulative loss of the algorithm over the entire game.

\begin{theorem}
\label{theorem-main}
For the algorithm $G_{1}$ applied to to the model \eqref{model-general} the following upper bound for the cumulative loss over the entire game holds true:

\begin{eqnarray}
H_{T}\leq -\frac{1}{\eta}\ln \bigg[\mathbb{E}_{p(\mathcal{N}_{T})}\big[e^{-\eta L_{T}^{\mathcal{N}_{T}}}\big]\bigg].
\label{main-loss-bound}\end{eqnarray}
\end{theorem}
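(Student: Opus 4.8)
The plan is to derive the bound from two ingredients: a per-step inequality coming from exponential concavity, and an exact ``telescoping'' identity for the Bayesian model \eqref{model-general} that rewrites the accumulated per-step terms as the stated expectation.

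First I would prove the per-step bound. Fix a step $t$ and apply the definition of $\eta$-exponential concavity \eqref{exp-concavity} to the discrete distribution $\bm{\pi}$ on $\Gamma$ that assigns mass $w_{t}^{n}$ to the point $\xi_{t}^{n}$ for $n=1,\dots,N$; by construction its barycentre \eqref{wa-prediction} is exactly the algorithm's forecast $\gamma_{t}=\langle\bm{w}_{t},\bm{\xi}_{t}\rangle$. This gives $e^{-\eta h_{t}}=e^{-\eta\lambda(\omega_{t},\gamma_{t})}\ge\sum_{n=1}^{N}w_{t}^{n}e^{-\eta l_{t}^{n}}$, hence
$$h_{t}\le-\tfrac{1}{\eta}\ln\Big(\sum_{n=1}^{N}w_{t}^{n}e^{-\eta l_{t}^{n}}\Big),$$
and summing over $t=1,\dots,T$ yields $H_{T}\le-\tfrac{1}{\eta}\ln\prod_{t=1}^{T}\big(\sum_{n=1}^{N}w_{t}^{n}e^{-\eta l_{t}^{n}}\big)$.

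Second I would identify the product on the right with $\mathbb{E}_{p(\mathcal{N}_{T})}[e^{-\eta L_{T}^{\mathcal{N}_{T}}}]$. Introduce the partition functions $\Phi_{t}=\sum_{\mathcal{N}_{t}\in\mathcal{N}^{t}}p(\mathcal{N}_{t})e^{-\eta L_{t}^{\mathcal{N}_{t}}}$ for $t=0,\dots,T$, with the conventions $\Phi_{0}=1$ and $p(n_{1}\mid\mathcal{N}_{0})=p_{0}(n_{1})$, so that $\Phi_{T}=\mathbb{E}_{p(\mathcal{N}_{T})}[e^{-\eta L_{T}^{\mathcal{N}_{T}}}]$. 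Using the factorisation \eqref{model-general} together with $p(\bm{\xi}_{\tau}\mid n_{\tau})=e^{-\eta l_{\tau}^{n_{\tau}}}/Z_{\tau}$, the normalising constants $Z_{\tau}$ cancel in Bayes' rule and one obtains $p(\mathcal{N}_{t}\mid\bm{\Xi}_{t})=p(\mathcal{N}_{t})e^{-\eta L_{t}^{\mathcal{N}_{t}}}/\Phi_{t}$; consequently the weight defined in \eqref{general-update} is the one-step-ahead predictive probability $w_{t}^{n_{t}}=p(n_{t}\mid\bm{\Xi}_{t-1})=\sum_{\mathcal{N}_{t-1}}p(n_{t}\mid\mathcal{N}_{t-1})p(\mathcal{N}_{t-1}\mid\bm{\Xi}_{t-1})$. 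Splitting the sum defining $\Phi_{t}$ over the pairs $(\mathcal{N}_{t-1},n_{t})$ and inserting this identity gives $\Phi_{t}=\Phi_{t-1}\sum_{n=1}^{N}w_{t}^{n}e^{-\eta l_{t}^{n}}$, i.e.\ $\sum_{n=1}^{N}w_{t}^{n}e^{-\eta l_{t}^{n}}=\Phi_{t}/\Phi_{t-1}$. Multiplying over $t=1,\dots,T$ telescopes to $\prod_{t=1}^{T}\big(\sum_{n=1}^{N}w_{t}^{n}e^{-\eta l_{t}^{n}}\big)=\Phi_{T}/\Phi_{0}=\mathbb{E}_{p(\mathcal{N}_{T})}[e^{-\eta L_{T}^{\mathcal{N}_{T}}}]$, and substituting this into the inequality from the first step completes the proof.

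I expect the main obstacle to be the bookkeeping in the second step: one must verify carefully that the recursively updated weights $\bm{w}_{t}$ really coincide with the model's predictive probabilities $p(n_{t}\mid\bm{\Xi}_{t-1})$, which requires combining Bayes' rule with the (generalised) Markov structure $p(n_{t}\mid\mathcal{N}_{t-1})$ and the cancellation of the $Z_{\tau}$'s, and also handling the $t=1$ boundary via $p(n_{1}\mid\mathcal{N}_{0})=p_{0}(n_{1})$. The exponential-concavity step is routine, and throughout the time-series values $\omega_{t}$ are treated as fixed conditioning data, consistently with the footnote to \eqref{model-general}.
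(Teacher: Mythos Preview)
Your proof is correct and follows essentially the same route as the paper: a per-step mixloss bound from $\eta$-exponential concavity, followed by a telescoping identity for the Bayesian model. The only cosmetic difference is that the paper carries the normalising constants $Z_{t}$ explicitly and writes the telescoping factor as $Z_{t}\cdot p(\bm{\xi}_{t}\mid\bm{\Xi}_{t-1})$, whereas your partition functions $\Phi_{t}$ absorb the $Z_{t}$'s from the outset; the two formulations are related by $\Phi_{t}=p(\bm{\Xi}_{t})\prod_{\tau\le t}Z_{\tau}$, so the arguments are equivalent.
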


Similar results were obtained by \cite{Vovk1999}. In this article we reformulate these results in terms of our interpretable probabilistic framework. This is required for the completeness of the exposition and theoretical analysis of algorithm $G_{D}$ (see Subsection \ref{general-model-d}).

\begin{proof}
Define the mixloss at the step $t$:
\begin{equation}
m_{t}=-\frac{1}{\eta}\ln\big[\sum_{ n_{t}\in\mathcal{N}}e^{-\eta\lambda(\omega_{t},\xi_{t}^{ n_{t}})}\cdot w_{t}^{ n_{t}}\big].
\label{mixloss-definition}
\end{equation}
Since $\lambda$ is $\eta$-exponentially concave function, for the aggregated prediction $\gamma_{t}=\langle\bm{w}_{t},\bm{\xi}_{t}\rangle$ and probability distribution $\bm{w}_{t}$ we have
$$e^{-\eta\lambda(\omega_{t},\gamma_{t})}\geq \sum_{ n_{t}\in\mathcal{N}}e^{-\eta\lambda(\omega_{t},\xi_{t}^{ n_{t}})}w_{t}^{ n_{t}},$$
which is equal to $e^{-\eta h_{t}}\geq e^{-\eta m_{t}}$. We conclude that $h_{t}\leq m_{t}$ for all $t\in \{1,\dots,T\}$. Thus, the similar inequality is true for the cumulative loss of the algorithm and the cumulative mixloss:

$$H_{T}=\sum_{t=1}^{T}h_{t}\leq \sum_{t=1}^{T}m_{t}=M_{T}.$$
Now lets compute $M_{T}$.

For all $t$
\begin{eqnarray}
m_{t}=-\frac{1}{\eta}\ln\big[\sum_{ n_{t}\in\mathcal{N}}e^{-\eta\lambda(\omega_{t},\xi_{t}^{ n_{t}})}\cdot p( n_{t}|\bm{\Xi}_{t-1})\big]=
\nonumber
\\
-\frac{1}{\eta}\ln\big[\sum_{ n_{t}\in\mathcal{N}}Z_{t}\cdot p(\bm{\xi}_{t}| n_{t})\cdot p( n_{t}|\bm{\Xi}_{t-1})\big]=
\nonumber
\\
-\frac{1}{\eta}\ln Z_{t}-\frac{1}{\eta}\ln\big[\sum_{ n_{t}\in\mathcal{N}}p(\bm{\xi}_{t}| n_{t})\cdot p( n_{t}|\bm{\Xi}_{t-1})\big]=
\nonumber
\\
-\frac{1}{\eta}\ln Z_{t}-\frac{1}{\eta}\ln p(\bm{\xi}_{t}|\bm{\Xi}_{t-1})
\nonumber
\end{eqnarray}
We compute
\begin{eqnarray}
H_{T}\leq M_{T}=\sum_{t=1}^{T}m_{t}=\frac{1}{\eta}\ln \prod_{t=1}^{T}Z_{t}-\frac{1}{\eta}\ln\prod_{t=1}^{T}p(\bm{\xi}_{t}|\bm{\Xi}_{t-1})=
\nonumber
\\
\frac{1}{\eta}\ln \prod_{t=1}^{T}Z_{t}-\frac{1}{\eta}\ln p(\bm{\Xi}_{T})=-\frac{1}{\eta}\ln \bigg[\mathbb{E}_{p(\mathcal{N}_{T})}\big[e^{-\eta L_{T}^{\mathcal{N}_{T}}}\big]\bigg]
\end{eqnarray}
and finish the proof.
\end{proof}

In the current form it is difficult to understand the meaning of the theorem. However, the main idea is partially shown in the following corollary.

\begin{corollary}\label{corollary-finite}
The regret of the algorithm $G_{1}(p)$ with respect to the sequence $\mathcal{N}_{T}^{*}=\{n_{1}^{*},\dots,n_{T}^{*}\}$ of experts has the following upper bound:
\begin{eqnarray}
R_{T}(\mathcal{N}_{T}^{*})=H_{T}-\sum_{t=1}^{T}l_{t}^{ n_{t}^{*}}\leq -\frac{1}{\eta}\ln p(\mathcal{N}_{T}^{*}).
\label{bound-to-single-expert}
\end{eqnarray}
\end{corollary}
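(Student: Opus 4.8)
The plan is to derive this directly from Theorem~\ref{theorem-main} by lower-bounding the expectation on its right-hand side by a single term. Recall that, for a path of active experts $\mathcal{N}_T=(n_1,\dots,n_T)$, the quantity $L_T^{\mathcal{N}_T}=\sum_{t=1}^T l_t^{n_t}=\sum_{t=1}^T\lambda(\omega_t,\xi_t^{n_t})$ is the cumulative loss along that path, so that
\begin{equation}
\mathbb{E}_{p(\mathcal{N}_T)}\big[e^{-\eta L_T^{\mathcal{N}_T}}\big]=\sum_{\mathcal{N}_T\in\mathcal{N}^T}p(\mathcal{N}_T)\,e^{-\eta L_T^{\mathcal{N}_T}}.
\label{eq-corr-expansion}
\end{equation}
Every summand in \eqref{eq-corr-expansion} is non-negative, hence the sum is at least the summand indexed by the comparator path $\mathcal{N}_T^*=(n_1^*,\dots,n_T^*)$:
\begin{equation}
\mathbb{E}_{p(\mathcal{N}_T)}\big[e^{-\eta L_T^{\mathcal{N}_T}}\big]\;\geq\;p(\mathcal{N}_T^*)\,e^{-\eta L_T^{\mathcal{N}_T^*}}=p(\mathcal{N}_T^*)\exp\Big(-\eta\sum_{t=1}^T l_t^{n_t^*}\Big).
\label{eq-corr-drop}
\end{equation}

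Next I would plug \eqref{eq-corr-drop} into the bound \eqref{main-loss-bound} of Theorem~\ref{theorem-main}. Since the map $x\mapsto-\frac1\eta\ln x$ is decreasing on $(0,\infty)$ and $p(\mathcal{N}_T^*)>0$, applying it to both sides of \eqref{eq-corr-drop} reverses the inequality, giving
\begin{equation}
H_T\;\leq\;-\frac{1}{\eta}\ln\Big[p(\mathcal{N}_T^*)\exp\Big(-\eta\sum_{t=1}^T l_t^{n_t^*}\Big)\Big]=-\frac{1}{\eta}\ln p(\mathcal{N}_T^*)+\sum_{t=1}^T l_t^{n_t^*}.
\label{eq-corr-final}
\end{equation}
Subtracting $\sum_{t=1}^T l_t^{n_t^*}$ from both sides of \eqref{eq-corr-final} yields $R_T(\mathcal{N}_T^*)=H_T-\sum_{t=1}^T l_t^{n_t^*}\leq-\frac1\eta\ln p(\mathcal{N}_T^*)$, which is exactly \eqref{bound-to-single-expert}.

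There is no real obstacle here: the only nontrivial input is Theorem~\ref{theorem-main} itself, and the corollary is obtained by the elementary observation that a sum of non-negative terms dominates any one of its terms, followed by monotonicity of the logarithm. The one point worth stating carefully is that this requires $p(\mathcal{N}_T^*)>0$ so that the logarithm is well defined (and the bound is vacuous, i.e.\ $+\infty$, on null paths), which is automatic whenever the prior $p_0$ and the transition kernels $p(n_{t+1}\mid\mathcal{N}_t)$ assign positive mass to the relevant experts.
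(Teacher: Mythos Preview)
Your proof is correct and follows exactly the same approach as the paper: lower-bound the expectation in Theorem~\ref{theorem-main} by the single term indexed by $\mathcal{N}_T^*$, then use monotonicity of $x\mapsto-\frac{1}{\eta}\ln x$. Your added remark that the bound is only meaningful when $p(\mathcal{N}_T^*)>0$ is a fair clarification.
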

\begin{proof}
We simply bound \eqref{main-loss-bound}:
$$H_{T}\leq -\frac{1}{\eta}\ln \bigg[\mathbb{E}_{p(\mathcal{N}_{T})}\big[e^{-\eta L_{T}^{\mathcal{N}_{T}}}\big]\bigg]\leq -\frac{1}{\eta}\ln \bigg[p(\mathcal{N}_{T}^{*})\cdot e^{-\eta L_{T}^{\mathcal{N}_{T}^{*}}}\bigg]=\sum_{t=1}^{T}l_{t}^{ n_{t}^{*}}-\frac{1}{\eta}\ln p(\mathcal{N}_{T}^{*}).$$
\end{proof}

Applying algorithm $G_{1}$ to different probability models $p$ makes it possible to change the upper regret bound with respect to concrete sequences $\mathcal{N}_{T}$.\footnote{In this article we do not raise the question of computational efficiency of algorithm $G_{1}(p)$. In fact, computational time and required memory can be pretty high for complicated distributions $p$, even $O(N^{T})$. Nevertheless, all the special algorithms that we consider ($V_{1}, V_{D}, V_{D}^{FC}$) are computationally efficient. They require $\leq O(NT)$ computational time and $\leq O(ND)$ memory.} Choosing different $p$ makes it possible to obtain adaptive algorithms (e.g. Fixed Share by \cite{HeW98}, \cite{AdaAda16}, \cite{Vovk1999}). Such algorithms provide low regret not only with respect to the best constant sequence of experts, but also with some more complicated sequences.

For example, suppose we are to obtain minimal possible regret bound with respect to the sequences $(\mathcal{N}^{T})^{*}\subset \mathcal{N}^{T}$. In this case it is reasonable to set $p(\mathcal{N}_{T})=\frac{1}{|(\mathcal{N}^{T})^{*}|}$ for all $\mathcal{N}_{T}\in(\mathcal{N}^{T})^{*}$ and $p(\mathcal{N}_{T})=0$ for all other $\mathcal{N}_{T}$. 

Nevertheless, the most popular approach is to compete with the best (fixed) expert. This approach is simple and, at the same time, serves as the basis for each research. 

\subsection{Case of Hidden Markov Process: Classical Vovk's Algorithm}
\label{vovk-classic}
Consider the simplified dependence of active experts shown on Figure \ref{figure:model-simple}. For all $t$ expert $ n_{t+1}$ depends only on the previous expert $ n_{t}$, that is, $p( n_{t+1}|\mathcal{N}_{T})=p( n_{t+1}| n_{t})$.

\begin{figure}[!htb]
\begin{center}
\includegraphics[scale=0.7]{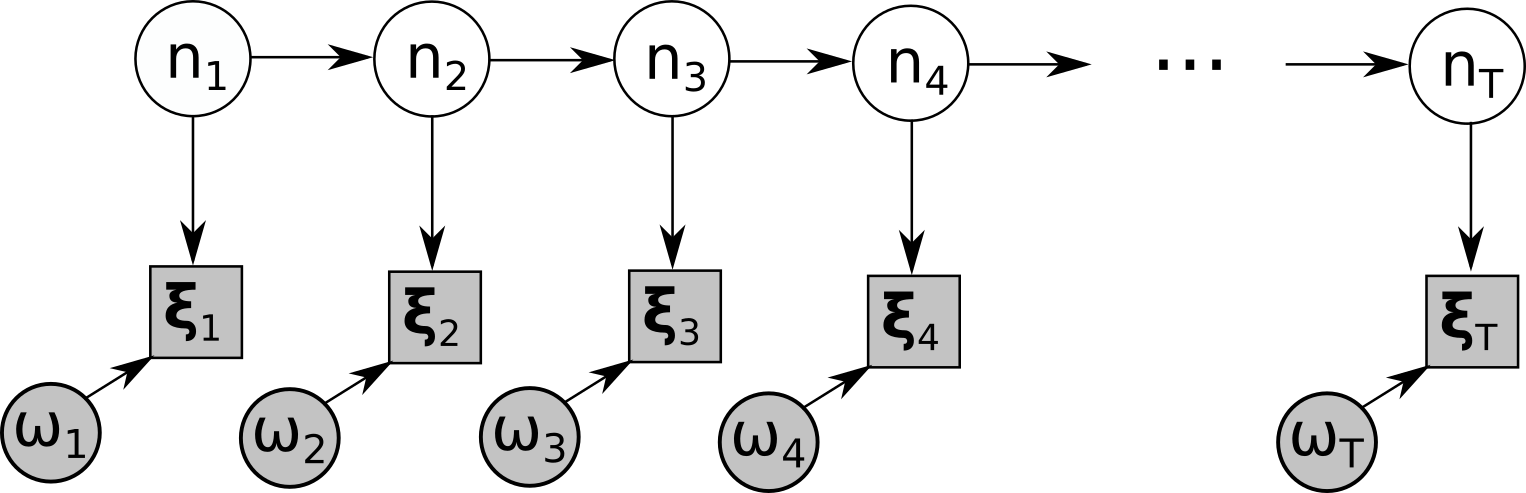}
\end{center}
\caption{Hidden Markov Model model for data generation process.}
\label{figure:model-simple}
\end{figure} 

The classic Vovk's aggregating algorithm is obtained from $G_{1}$ by applying it to the simple distribution $p$. We put $p(n_{1})=\frac{1}{N}$ (for all $n_{1}\in\mathcal{N}$) and $p(n_{t}|n_{t-1})=[n_{t}=n_{t-1}]$ (for all $t>1$ and $n_{t},n_{t-1}\in \mathcal{N}$). We denote the algorithm $G_{1}$ for the described $p$ by $V_{1}$.

According to Corollary \ref{corollary-finite}, algorithm $V_{1}$ has the following regret bound with respect to the best constant expert $\mathcal{N}_{T}^{*}=(n^{*},\dots, n^{*})$:

\begin{equation}
R_T(\mathcal{N}_{T}^{*})=H_{T}-\sum_{t=1}^{T}l_{t}^{ n^{*}}\leq -\frac{1}{\eta}\ln p(\mathcal{N}_{T}^{*})=\frac{\ln N}{\eta}.
\label{vovk-regret-bound}
\end{equation}

At the same time, it is easy to recurrently compute the weights $\bm{w}_{t}$ step by step, that is, $\bm{w}_{1}\rightarrow \bm{w}_{2}\rightarrow \dots$. We get

$$w_{t+1}^{n_{t+1}}=p(n_{t}|\bm{\Xi}_{t})=\frac{e^{-\eta L_{t}^{n_{t}}}}{\sum_{n=1}^{N}e^{-\eta L_{T}^{n}}}=\frac{w_{t}^{n_{t}}e^{-\eta l_{t}^{n_{t}}}}{\sum_{n=1}^{N}w_{t}^{n}e^{-\eta l_{t}^{n}}}.$$

Thus, algorithm $V_{1}$ is a powerful and computationally efficient tool to aggregate experts in the game of the $1$-step-ahead forecasting.

\section{Aggregating Algorithm for Long-Term Forecasting}
\label{d-step-ahead}
The Section is devoted to aggregating algorithms for the long-term forecasting. In Subsection \ref{general-model-d} we provide a natural long-term forecasting generalization $G_{D}$ of algorithm $G_{1}$ by \cite{Vovk1999}. We provide the general regret bound and discuss the difficulties that prevent us from obtaining general bound in a simple form. In Subsection \ref{vovk-optimal} we show how the replicated version $V_{D}$ for the long-term forecasting of $V_{1}$ fits into the general model, and prove its regret bound. In Subsection \ref{practical-vovk} we describe non-replicated version $V_{D}^{FC}$ for the long-term forecasting of $V_{1}$ and prove its $O(\sqrt{T})$ regret bound.

\subsection{General Model}
\label{general-model-d}
We describe the natural algorithm obtained by enhancing $G_{1}$ for the problem of the $D$-th-step-ahead forecasting below and denote it by $G_{D}$. Note that weights $\bm{w}_{t}$ in $G_{D}$ differ for different $D$ (for the same probability model $p$).

 \bigskip
 
{\bf Algorithm $G_{D}(p)$ } (Aggregating algorithm for distribution $p$ of active experts)
{\small
\medskip\hrule\hrule\medskip
\noindent \hspace{2mm}Set initial prediction weights $w_{t}^{n_{t}}=p(n_{t})$ for all $t=1,\dots,D$ and $n_{t}\in\mathcal{N}$.

\noindent \hspace{2mm}Get the predictions $\xi_{t}^{n}\in\Gamma$ of experts $ n\in\mathcal{N}$  for steps $t=1,\dots,D$.

\noindent \hspace{2mm}Compute the aggregated predictions $\gamma_{t}=\langle\bm{w}_{t},\bm{\xi}_{t}\rangle$ for steps $t=1,\dots,D$.

\smallskip
\noindent \hspace{2mm}{\bf FOR} $t=1,\dots ,T$
\begin{enumerate}
\item Observe the true outcome $\omega_{t}\in \Omega$.
\item Update the weights
\begin{enumerate}

\item Calculate the prediction weights $\bm{w}_{t+D}=(w_{t}^{1},\dots, w_{t}^{N})$, where 
$$w_{t}^{n_{t+D}}=p(n_{t+D}|\bm{\Xi}_{t})$$
for all $n_{t+D}\in\mathcal{N}$.
\end{enumerate}
\item Make forecast for the next step (if $t\leq T-1$)
\begin{enumerate}
\item Get the predictions $\xi_{t+D}^{ n}\in\Gamma$ of experts $ n\in\mathcal{N}$ for the step $t+D$.
\item Combine the aggregated prediction $\gamma_{t+D}=\langle \bm{w}_{t+D}, \bm{\xi}_{t+D}\rangle\in \Gamma$ of the algorithm.
\end{enumerate}
\end{enumerate}

\noindent \hspace{2mm}{\bf ENDFOR}
\medskip\hrule\hrule\medskip
}
\smallskip

Despite the fact that algorithm $G_{D}$ (for $D>1$) is the direct modification of $G_{1}$, it seems hard to obtain the adequate general bound of the loss of the form \eqref{main-loss-bound}. Indeed, let us try to apply the ideas of Theorem \ref{theorem-main} proof to algorithm $G_{D}$.

Denote by $m_{t}$ the mixloss from \eqref{mixloss-definition}. Recall that the weights $w_{t}^{n_{t}}$ here are equal to the probabilities $p(n_{t}|\bm{\Xi}_{t-D})$ but not $p(n_{t}|\bm{\Xi}_{t-1})$. Again, for $\eta$-exponentially concave function we have $h_{t}\leq m_{t}$. Similar to the proof of Theorem \ref{theorem-main}, we compute for all $t$
$$m_{t}=-\frac{1}{\eta}\ln Z_{t}-\frac{1}{\eta}\ln p(\bm{\xi}_{t}|\bm{\Xi}_{t-D}),$$
where we assume $\bm{\Xi}_{t-D}=\varnothing$ for $t\leq D$. The cumulative mixloss is equal to

\begin{equation}H_{t}\leq M_{t}=\sum_{t=1}^{T}m_{t}=\frac{1}{\eta}\ln \prod_{t=1}^{T}Z_{t}-\frac{1}{\eta}\ln\prod_{t=1}^{T}p(\bm{\xi}_{t}|\bm{\Xi}_{t-D}).
\label{main-loss-bound-2}
\end{equation}
Unfortunately, for $D>1$ in the general case this expression can not be simplified in the same way as in Theorem \ref{theorem-main} for $G_{1}$.

However, there exist some simple theoretical cases when this bound can be simplified. For a fixed $D$, the obvious one is when

\begin{equation}
p(\mathcal{N}_{T})=\prod_{t=T-D+1}^{T}p(\widehat{\mathcal{N}}_{t}),
\label{full-disconnection}
\end{equation}
where we use $\widehat{\mathcal{N}}_{t}=(\dots, n_{t-2D},n_{t-D},n_{t})$ for all $t=1,\dots, T$. Note that \eqref{full-disconnection} means that probability distributions on separate grids $GR_{d}$ (for $d=1,\dots, D$) are independent. In this case, the learning process separates into $D$ disjoint one-step ahead forecasting games on grids $GR_{d}$. We have $p(\bm{\xi}_{t}|\bm{\Xi}_{t-D})=p(\bm{\xi}_{t}|\widehat{\bm{\Xi}}_{t-D})$ and \eqref{main-loss-bound-2} is simplified to

\begin{eqnarray}
H_{t}\leq M_{t}=\frac{1}{\eta}\ln \prod_{t=1}^{T}Z_{t}-\frac{1}{\eta}\ln\prod_{t=T-D+1}^{T}p(\widehat{\bm{\Xi}}_{t})=
\nonumber
\\
\frac{1}{\eta}\ln \prod_{t=1}^{T}Z_{t}-\frac{1}{\eta}\ln\prod_{t=T-D+1}^{T}p(\widehat{\bm{\Xi}}_{t})=-\frac{1}{\eta}\ln \bigg[\mathbb{E}_{p(\mathcal{N}_{T})}\big[e^{-\eta L_{T}^{\mathcal{N}_{T}}}\big]\bigg],
\nonumber
\end{eqnarray}
where $\widehat{\bm{\Xi}}_{t}=\{\bm{\xi}_{t},\bm{\xi}_{t-D}, \bm{\xi}_{t-2D},\dots\}$ for all $t$.

In Appendix \ref{optimal-approach} we prove that the approach \eqref{full-disconnection} may be considered as optimal when the goal is to compete with the best expert. Nevertheless, there are no guaranties that this approach is optimal in the general case. 

\subsection{Optimal Approach: Replicated Vovk's Algorithm}
\label{vovk-optimal}

Algorithm $V_{1}$ can be considered as close to optimal (for the $1$-step-ahead forecasting and competing with the best expert in the pool) because of its low constant regret bound $R_{T}\leq \frac{\ln N}{\eta}$. One can obtain the aggregating algorithm $V_{D}$ for the long-term forecasting whose regret is close to optimal when competing with the best constant active expert. The idea is to run $D$ independent one-step-ahead forecasting algorithms $V_{1}$ on $D$ disjoint subgrids $GR_{d}$ (for $d\in\{1,\dots,D\}$). This idea is motivated by Theorem \ref{optimality-theorem} from Appendix \ref{optimal-approach}.

To show how this method fits into the general model $G_{D}$, we define $p(\mathcal{N}_{D})\equiv (\frac{1}{N})^{D}$ for all $\mathcal{N}_{D}\in \mathcal{N}^{D}$. Next, for all $t>D$ we set $p(n_{t+1}|\mathcal{N}_{t})=p(n_{t+1}|n_{t+1-D})=[n_{t+1}=n_{t+1-D}]$, that is, active expert $n_{t+1}$ depends only on expert $n_{t+1-D}$ that was active $D$ steps ago. The described model is shown in Figure \ref{figure:model-D}.

\begin{figure}[!htb]
\centering
\includegraphics[scale=0.63]{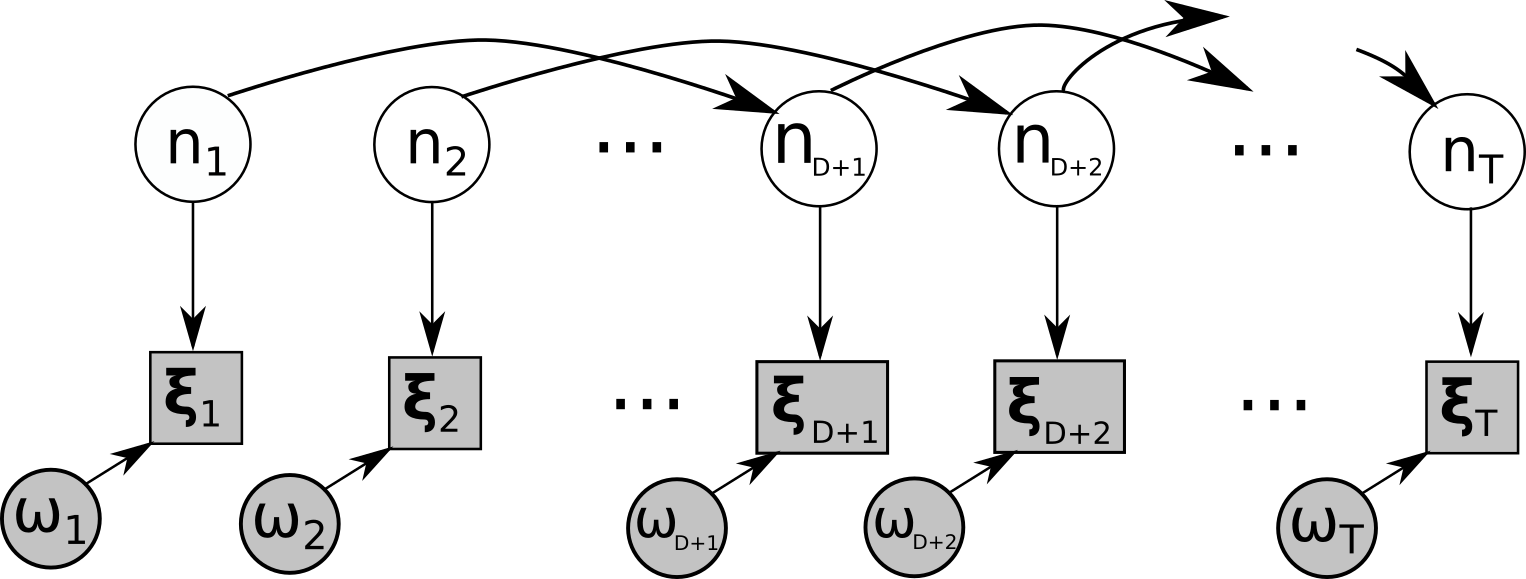}
\caption{The probabilistic model for algorithm $V_{D}$.}
\label{figure:model-D}
\end{figure}

For simplicity, we assume that $T$ is a multiple of $D$. Since $V_{D}$ runs $D$ independent copies of $V_{1}$, it is easy to estimate its regret with respect to the best expert:

$$R_T(\mathcal{N}_{T}^{*})=H_{T}-\sum_{t=1}^{T}l_{t}^{ n^{*}}\leq -\frac{1}{\eta}\ln p(\mathcal{N}_{T}^{*})=D\frac{\ln N}{\eta}.$$

At the same time, the weights' updating process is simple (it can be separated into grids). We have

$$w_{t}^{n_{t}}=p(n_{t}|\bm{\Xi}_{t-D})=\frac{e^{-\eta \widehat{L}_{t-D}^{n_{t}}}}{\sum_{n=1}^{N}e^{-\eta \widehat{L}_{t-D}^{n}}}=\frac{w_{t-D}^{n_{t}}e^{-\eta l_{t-D}^{n_{t}}}}{\sum_{n=1}^{N}w_{t-D}^{n}e^{-\eta l_{t-D}^{n}}},$$
where $\widehat{L}_{t}^{n_{t}}=\sum_{\tau=1}^{t}l_{t}^{n_{t}}$ for all $t$. This formula allows efficient recurrent computation 
$$\dots\rightarrow \bm{w}_{t-D}\rightarrow \bm{w}_{t}\rightarrow \bm{w}_{t+D}\rightarrow \dots.$$

\subsection{Practical Approach: Non-Replicated Vovk's Algorithm}
\label{practical-vovk}

Despite the fact that algorithm $V_{D}$ is theoretically close to optimal for competing with the best expert, it has several practically obvious disadvantages. With the increase of $D$ the overall length of subgames decreases ($\sim \frac{T}{D}$) and subgrids become more infrequent.
Moreover, to set the forecasting weight $\bm{w}_{t+D}$ at step $t$ we use only $\approx\frac{t}{D}$ previous observations and forecasts.

One may wonder why not use all the observed losses to set the weight $\bm{w}_{t+D}$. We apply the algorithm $G_{D}$ to the probability distribution $p$ from Subsection \ref{vovk-classic} which is a case of model from Figure \ref{figure:model-simple}. We denote the obtained algorithm by $V_{D}^{FC}$.

The weights are efficiently recomputed $\dots\rightarrow \bm{w}_{t-1}\rightarrow \bm{w}_{t}\rightarrow \bm{w}_{t+1}\rightarrow \dots$ according to the formula

$$w_{t}^{n_{t}}=p(n_{t}|\bm{\Xi}_{t-D})=\frac{e^{-\eta L_{t-D}^{n_{t}}}}{\sum_{n=1}^{N}e^{-\eta L_{t-D}^{n}}}=\frac{w_{t-1}^{n_{t}}e^{-\eta l_{t-D}^{n_{t}}}}{\sum_{n=1}^{N}w_{t-1}^{n}e^{-\eta l_{t-D}^{n}}}.$$

\begin{theorem} For the $\eta^{\lambda}$-exponentially concave (for some $\eta^{\lambda}$) and $L$-Lipshitz for all $\omega$ w.r.t. second argument $\gamma\in\Gamma$ and $\|\cdot\|_{\Gamma}$ loss function 
$$\lambda:\Omega\times\Gamma\rightarrow [0, H]\subset\mathbb{R}_{+}$$
with $\max_{\gamma\in\Gamma}\|\gamma\|_{\Gamma}\leq B$ there exists $T_{0}$ such that for all $T\geq T_{0}$ the following holds true: there exists $\eta^{\lambda}>\eta^{*}>0$ such that algorithm $V_{D}^{FC}=G_{D}(p)$ with $N$ experts and learning rate $\eta^{*}$ has regret bound 
$$R_{T}\leq O(\sqrt{N\ln N}\cdot \sqrt{T})$$
with respect to the best expert in the pool.
\label{theorem-practical-vovk}
\end{theorem}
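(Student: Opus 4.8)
The idea is to carry out the analysis against an auxiliary \emph{undelayed} comparator: the classical exponential-weights algorithm $V_{1}$ of Subsection~\ref{vovk-classic}, fed the same loss sequence $\bm l_{1},\dots,\bm l_{T}$. This algorithm cannot be run in the delayed game (it would need $L_{t-1}$ to produce $\gamma_{t}$), but it is a legitimate object for bookkeeping. Let $\bm w_{t}$ be the weights actually used by $V_{D}^{FC}$, so $w_{t}^{n}\propto e^{-\eta^{*}L_{t-D}^{n}}$, and let $\widetilde{\bm w}_{t}$ be the undelayed weights $\widetilde w_{t}^{n}\propto e^{-\eta^{*}L_{t-1}^{n}}$, both equal to $\frac{\bm e}{N}$ for non-positive indices; put $\gamma_{t}=\langle\bm w_{t},\bm\xi_{t}\rangle$, $\widetilde\gamma_{t}=\langle\widetilde{\bm w}_{t},\bm\xi_{t}\rangle$, $h_{t}=\lambda(\omega_{t},\gamma_{t})$, $\widetilde h_{t}=\lambda(\omega_{t},\widetilde\gamma_{t})$. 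Since $\lambda$ is $\eta^{*}$-exponentially concave for every $\eta^{*}\le\eta^{\lambda}$, the regret bound \eqref{vovk-regret-bound} for $V_{1}$ gives $\sum_{t=1}^{T}\widetilde h_{t}\le\min_{n\in\mathcal N}L_{T}^{n}+\frac{\ln N}{\eta^{*}}$, so that $R_{T}\le\frac{\ln N}{\eta^{*}}+\sum_{t=1}^{T}(h_{t}-\widetilde h_{t})$ and everything reduces to a stability estimate on the last sum.

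For the stability estimate I would proceed as follows. The vector $\widetilde{\bm w}_{t}$ is obtained from $\bm w_{t}$ by at most $D-1$ consecutive exponential-weights updates with loss vectors from $[0,H]^{N}$; for one update $\bm u\mapsto\bm u'$, $u'^{n}=u^{n}e^{-\eta^{*}l^{n}}/\langle\bm u,e^{-\eta^{*}\bm l}\rangle$, the multiplicative factor lies in $[e^{-\eta^{*}H},e^{\eta^{*}H}]$, so after $D-1$ updates $w_{t}^{n}/\widetilde w_{t}^{n}\in[e^{-(D-1)\eta^{*}H},e^{(D-1)\eta^{*}H}]$ for every $n$ and $t$, and hence $|w_{t}^{n}-\widetilde w_{t}^{n}|=O(\eta^{*}DH)$ coordinatewise (for $\eta^{*}\le\eta^{\lambda}$ the factor $e^{(D-1)\eta^{*}H}$ is an absolute constant). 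Using convexity of $\Gamma$ and $\max_{\gamma\in\Gamma}\|\gamma\|_{\Gamma}\le B$, and writing $\gamma_{t}-\widetilde\gamma_{t}=\sum_{n}(w_{t}^{n}-\widetilde w_{t}^{n})\xi_{t}^{n}$, the coordinatewise bound gives $\|\gamma_{t}-\widetilde\gamma_{t}\|_{\Gamma}=O(NBHD\,\eta^{*})$ --- this is where the dimension $N$ enters. The $L$-Lipschitz assumption then yields $h_{t}-\widetilde h_{t}\le L\|\gamma_{t}-\widetilde\gamma_{t}\|_{\Gamma}=O(NBHLD\,\eta^{*})$ uniformly in $t$, and, crucially, this bound is linear in $\eta^{*}$.

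Summing over $t=1,\dots,T$ and plugging into the reduction of the first paragraph gives $R_{T}\le\frac{\ln N}{\eta^{*}}+C\,N\,\eta^{*}\,T$ with $C=C(B,H,L,D)$. Balancing the two terms by the choice $\eta^{*}=\Theta\!\big(\sqrt{\tfrac{\ln N}{NT}}\big)$ produces $R_{T}\le O(\sqrt{N\ln N}\cdot\sqrt T)$. Since this $\eta^{*}\to 0$ as $T\to\infty$, there is a threshold $T_{0}$ such that $0<\eta^{*}<\eta^{\lambda}$ for all $T\ge T_{0}$ --- which both legitimises every use of $\eta^{*}$-exponential concavity above and accounts for the $T_{0}$ in the statement.

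I expect the stability step to be the main obstacle. The naive bound $|h_{t}-\widetilde h_{t}|\le H$ is useless --- summed over $t$ it only yields linear regret --- so one genuinely has to exploit that, for small $\eta^{*}$, the delayed weights drift away from the undelayed ones at rate $O(\eta^{*})$ per round; this is exactly what lets the two error terms be traded off against each other and gives the $\sqrt T$ rate. A secondary reason the detour through $V_{1}$ is needed is that the clean telescoping used for $G_{1}$ in the proof of Theorem~\ref{theorem-main} does not go through for $D>1$ (see the remarks after \eqref{main-loss-bound-2}), so a direct mixloss computation for $V_{D}^{FC}$ is unavailable and one must instead import the undelayed regret bound together with the Lipschitz property.
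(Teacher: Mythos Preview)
Your proposal is correct and follows essentially the same route as the paper: compare $V_{D}^{FC}$ to the undelayed algorithm $V_{1}$ run on the same losses, bound $|H_{T}^{D}-H_{T}^{1}|$ via the Lipschitz constant, the bound $B$, and the $\ell_{1}$-drift of the weights over $D-1$ missing updates, obtain $R_{T}\le \frac{\ln N}{\eta}+C N T\eta$, and balance. The one notable difference is in the weight-stability step: where the paper sets up and solves an explicit optimisation over $(\bm x,\bm a)\in\Delta_{N}\times\Delta_{N}'$ to show $\max_{t,n}|(w_{t}^{n})^{1}-(w_{t}^{n})^{D}|\le \big[\tfrac{(D-1)H}{4}+\epsilon\big]\eta$ for small $\eta$, your multiplicative-ratio argument ($w_{t}^{n}/\widetilde w_{t}^{n}\in[e^{-(D-1)\eta H},e^{(D-1)\eta H}]\Rightarrow |w_{t}^{n}-\widetilde w_{t}^{n}|\le e^{(D-1)\eta H}-1=O(\eta)$) reaches the same $O(\eta)$, $N$-independent conclusion more directly, at the cost of a slightly worse constant.
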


\begin{proof}
We use the superscript $(\ldots)^{D}$ to denote the variables obtained by algorithm $G_{D}(p)$ (for example, weights $\bm{w}_{t}^{D}$, predictions $\gamma_{t}^{D}$, etc.). Our main idea is to prove that the weights $\bm{w}_{t}^{D}$ are approximately equal to the weights $\bm{w}_{t}^{1}$ obtained in the one-step-ahead forecasting game $G_{1}(p)$ with the same experts and same time series $\omega_{t}$. Thus, the forecasts $\gamma_t^{D}$ and $\gamma_{t}^{1}$ have approximately the same losses $h_{t}^{D}$ and $h_{t}^{1}$ respectively.

We compare both algorithms with the same (yet unknown) learning rate $\eta$. Note that $\bm{w}_{t+1}^{1}=\bm{w}_{t+D}^{D}$. For all $t$ we have 
$$(w_{t}^{n_{t}})^{1}=(w_{t+D-1}^{n_{t}})^{D}\sim (w_{t}^{n_{t}})^{D}\cdot e^{-\eta L_{[t-D+1, t-1]}^{n_{t}}},$$
where $L_{[t-D+1,t-1]}^{n_{t}}=\sum_{\tau=t-D+1}^{t-1}l_{\tau}^{n_{t}}$. We estimate the difference between the cumulative losses $H_{T}^{1}$ and $H_{T}^{D}$ of forecasts of aggregating algorithms $G_{1}(p)=V_{1}$ and $G_{D}(p)=V_{D}^{FC}$ for the given $p$ from Subsection \ref{vovk-classic}.

\begin{eqnarray}
|H_{T}^{1}-H_{T}^{D}|=|\sum_{t=1}^{T}h_{t}^{1}-\sum_{t=1}^{T}h_{t}^{D}|\leq \sum_{t=1}^{T}|h_{t}^{1}-h_{t}^{D}|=\sum_{t=1}^{T}|\lambda(\omega_{t},\gamma_{t}^{1})-\lambda(\omega_{t},\gamma_{t}^{D})|\leq 
\nonumber
\\
L\sum_{t=1}^{T}\|\gamma_{t}^{1}-\gamma_{t}^{D}\|_{\Gamma}=L\sum_{t=1}^{T}\|\langle \bm{w}_{t}^{1}, \bm{\xi}_{t}\rangle - \langle \bm{w}_{t}^{D}, \bm{\xi}_{t}\rangle \|_{\Gamma}= L\sum_{t=1}^{T}\|\langle \bm{w}_{t}^{1}- \bm{w}_{t}^{D}, \bm{\xi}_{t}\rangle \|_{\Gamma} \leq 
\nonumber
\\
L\sum_{t=1}^{T}\sum_{n=1}^{N}|(w_{t}^{n})^{1}-(w_{t}^{n})^{D}|\cdot \|\xi_{t}^{n}\|_{\Gamma}\leq BL\sum_{t=1}^{T}\sum_{n=1}^{N}|(w_{t}^{n})^{1}-(w_{t}^{n})^{D}|\leq
\nonumber
\\
BLTN\cdot \max_{t, n}|(w_{t}^{n})^{1}-(w_{t}^{n})^{D}|
\label{bound-hard}
\end{eqnarray}
Our goal is to estimate the maximum. In fact, we are to estimate the maximum possible single weight change over $D-1$ steps in algorithm $G_{1}(p)$ or $G_{D}(p)$.

W.l.o.g. we assume that the maximum is achieved at step $t$ on the $1$-st coordinate ($n=1$). We denote $\bm{x}=\bm{w}_{t}^{D}\in \Delta_{N}$ and $\bm{a}\in \Delta_{N}$, where $a_{n}\sim e^{-\eta L_{[t-D+1, t-1]}^{n}}$ (so that $(w_{t}^{n})^{1}\sim a_{n}x_{n}$). The latter equation imposes several restrictions on $\bm{a}$. In fact, for all $n, n'\in\mathcal{N}$ the following must be true: $\frac{a_{n}}{a_{n'}}\leq e^{-\eta (D-1)H}$, where $H$ is the upper bound for the loss $\lambda(\omega, \gamma)$. We denote the subset of such vectors $\bm{a}$ by $\Delta_{N}'\subsetneq \Delta_{N}$.

Note that $\bm{w}_{t}^{1}\sim(x_{1}\cdot a_{1},x_{2}\cdot a_{2},\dots, x_{N}\cdot a_{N})$. We are to bound the maximum

$$\max_{x\in\Delta_{N}}\big[\max_{a\in\Delta_{N}'}\bigg[|x_{1}-\frac{x_{1}a_{1}}{\sum_{n=1}^{N}x_{n}a_{n}}|\bigg]\big].$$

We consider the case $x_{1}>\frac{x_{1}a_{1}}{\sum_{n=1}^{N}x_{n}a_{n}}$ (the other case is similar). In this case

$$\max_{x\in\Delta_{N}}\big[\max_{a\in\Delta_{N}'}\bigg[x_{1}-\frac{x_{1}a_{1}}{\sum_{n=1}^{N}x_{n}a_{n}}\bigg]\big]=\max_{x\in\Delta_{N}}\bigg(x_{1}\big[\max_{a\in\Delta_{N}'}\bigg[1-\frac{a_{1}}{\sum_{n=1}^{N}x_{n}a_{n}}\bigg]\big]\bigg).$$
We examine the behavior of $\big[1-\frac{a_{1}}{\sum_{n=1}^{N}x_{n}a_{n}}\big]$ under the fixed $\bm{x}$.

\begin{equation}\bigg[1-\frac{a_{1}}{\sum_{n=1}^{N}x_{n}a_{n}}\rightarrow \max_{\bm{a}\in \Delta_{N}'}\bigg]\iff \bigg[\frac{\sum_{n=2}^{N}x_{n}a_{n}}{a_{1}}=\sum_{n=2}^{N}\frac{a_{n}}{a_{1}}x_{n}\rightarrow \max_{\bm{a}\in \Delta_{N}'}\bigg]
\label{cond-maximum}\end{equation}
Since $\frac{a_{n}}{a_{1}}\leq e^{-\eta(D-1)H}$, we have $\frac{\sum_{n=1}^{N}x_{n}a_{n}}{a_{1}}\leq (1-x_{1})e^{-\eta(D-1)H}$, and the argument $\bm{a}$ that maximizes \eqref{cond-maximum} does not depend on $x_{2},\dots,x_{N}$. W.l.o.g. we can assume that $\bm{a}=(a, \frac{1-a}{N-1},\frac{1-a}{N-1},\dots,\frac{1-a}{N-1})$, where $a=\frac{e^{-\eta(D-1)H}}{(N-1)+e^{-\eta(D-1)H}}<\frac{1}{N}$. At the same time, we can assume that $\bm{x}=(x,\frac{1-x}{N-1},\frac{1-x}{N-1},\dots,\frac{1-x}{N-1})$.
Thus,
$$\max_{x\in\Delta_{N}}\big[\max_{a\in\Delta_{N}'}\bigg[x_{1}-\frac{x_{1}a_{1}}{\sum_{n=1}^{N}x_{n}a_{n}}\bigg]\big]=\max_{x\in (0, 1)}\big[x-\frac{xa}{xa+\frac{(1-x)(1-a)}{N-1}}\big].$$
The derivative is equal to zero at $x^{*}=\frac{1-a- \sqrt{a(1-a)(N-1)}}{1-aN}$. Substituting $x=x^{*}$ and $a=\frac{e^{-\eta(D-1)H}}{(N-1)+e^{-\eta(D-1)H}}$ we obtain the $N$-independent expression:
\begin{equation}
\label{eqq}
\frac{\big(1-\sqrt{e^{-\eta(D-1)H}}\big)^{2}}{1-e^{-\eta(D-1)H}}.
\end{equation}
We are interested in the behavior near $\eta=0$. A closer look at the Maclaurin series of numerator and denominator give us the decomposition

$$\frac{0+0\eta+\eta^{2}\frac{(D-1)^{2}H^{2}}{4}+\dots}{0+\eta(D-1)H+\dots}.$$
At $\eta\rightarrow 0$ the function is equal to $0$. There exist some $\eta^{0}>0$, such that for all $0\leq \eta<\eta_{0}$ expression \eqref{eqq} can be bounded by some $\eta$-linear function $U(D, H)\cdot \eta = \big[\frac{(D-1)H}{4}+\epsilon\big]\eta$.
Thus, expression \eqref{bound-hard} is bounded by

$$TN\cdot \bigg[B\cdot L\cdot N\cdot U(D, H)\bigg]\cdot \eta=FNT\eta$$
We set $F=B\cdot L\cdot U(D, H)$. Next,
$$H_{T}^{D}\leq H_{T}^{1}+|H_{T}^{1}-H_{T}^{D}|\leq \big[L_{T}^{*}+\frac{\ln N}{\eta}\big]+FNT\eta,$$
where $L_{T}^{*}=\min_{n\in\mathcal{N}}L_{T}^{n}=\min_{n\in\mathcal{N}}\big[\sum_{t=1}^{T}l_{t}^{n}\big]$ is the loss of the best constant expert.

Choosing $\eta^{*}=\argmin_{\eta > 0}\big[\frac{\ln N}{\eta}+FNT\eta\big]=\sqrt{\frac{\ln N}{FNT}}$, we obtain 
$$R_{T}\leq 2\sqrt{FN\ln N}\cdot \sqrt{T}=O(\sqrt{N\ln N}\sqrt{T})$$ regret bound with respect to the best expert in the pool. 

Note that in order to use linear approximation of maximum we need $\eta\leq\eta^{0}$. Moreover, to bound the loss $H_{T}^{1}$ we need $\eta^{*}$-exponentially concave function $\lambda$. This means that $\eta^{*}\leq \min \{\eta^{0}, \eta^{\lambda}\}$. Since ${\eta^{*}=\sqrt{\frac{\ln N}{FNT}}\sim \frac{1}{\sqrt{T}}}$, there exists some huge $T_{0}$ such that for $T\geq T_{0}$ the required conditions are met.\end{proof}

\section{Conclusion}

The problem of long-term forecasting is of high importance. In the article we developed general algorithm $G_{D}(p)$ for the $D$-th-step ahead forecasting with experts' advice (where $p$ is the distribution over active experts). The algorithm uses the ideas of the general aggregating algorithm $G_{1}$ by \cite{Vovk1999}. We also provided the expression for the upper bound for the loss of algorithm $G_{D}(p)$ for any probability distribution $p$ over active experts.

For its important special case $V_{D}^{FC}$ we proved the $O(\sqrt{T})$ regret bound w.r.t. the best expert in the pool. Algorithm $V_{D}^{FC}$ is a practical long-term forecasting modification of algorithm $V_{1}$ by \cite{VoV98}.

It seems possible to apply the approach from the proof of Theorem \ref{theorem-practical-vovk} in order to obtain simpler and more understandable loss bound for algorithm $G_{D}(p)$ for any probability distribution $p$. This statement serves as the challenge for our further research.

\bibliography{references}

\appendix
\section{Optimal Approach to Delayed Feedback Forecasting with Experts' Advice}
\label{optimal-approach}

The appendix is devoted to obtaining the minimax regret bound for the problem of the $D$-th-step-ahead forecasting as a function of the minimax bound for the $1$-step-ahead forecasting. We consider the general protocol of the $D$-th-step-ahead forecasting game  with experts' advice from Subsection \ref{ltf-pea}. Within the framework of the task, we desire to compete with the best expert in the pool. 

We use $\Omega_{t}$ to denote the sequence $(\omega_{1},...,\omega_{t})$ of time series values at first $t$ steps. Pair $I_{t}=(\Omega_{t}, \bm{\Xi}_{t+D})$ is the information that an online algorithm knows on the step $t$ of the game.
Let $S_{D, t}$ be the set of all possible online randomized prediction aggregation algorithms with the forecasting horizon $D$ and game length $t$. Each online algorithm $s\in S_{D, t}$ for given time series $\Omega_{t}$ and experts answers $\bm{\Xi}_{t}$ provides a sequence of distributions $\big(\pi_{1}^{s}(\gamma),...,\pi_{t}^{s}(\gamma)\big)$, where each distribution $\pi_{\tau}^{s}(\gamma)$ is a function of $I_{\tau-D}$ for $\tau=1,\dots,t$. We write $\pi_{\tau}^{s}(\gamma)=\pi^{s}(\gamma|I_{\tau-D})$.

The  expected cumulative loss of the algorithm $s\in S_{D, t}$ on a given $I_{T}$ is

$$H^{D}_{t}(s, I_{t})=\sum_{\tau=1}^{t}\big[\mathbb{E}_{\pi_{\tau}^{s}}\lambda(\omega_{\tau}, \gamma)\big]$$
and the cumulative loss of expert $n$:

$$L_{t}^{n}(I_{t})=\sum_{\tau=1}^{t}\lambda(\omega_{\tau}, \xi_{\tau}^{n})$$

The online performance of the algorithm $s\in S_{D, t}$ for the given $I_{t}$ is measured by the expected cumulative regret over $t$ rounds:

$$R^{D}_{t}(s, I_{t})=H^{D}_{t}(s, I_{t})-\min_{n\in \mathcal{N}}L_{t}^{n}(I_{t})$$
with respect to the best expert. Here $\lambda(\omega, \gamma): \Omega\times\Gamma\rightarrow\mathbb{R}_{+}$ is some loss function, not necessary convex, Lipshitz or exponentially concave. The performance of the strategy $s$ is

$$R_{t}^{D}(s)=\max_{I_{t}}R_{t}^{D}(s, I_{t}),$$
that is the maximal expected regret over all possible games $I_{t}$.
The strategy $s_{D, t}^{*}$ that achieves the minimal regret
$$s_{D,t}^{*}=\argmin\limits_{s_{D}\in S_{D, t}}R_{t}^{D}(s)$$
is called optimal (may not be unique).

\begin{theorem}
For the given $\Omega_{T}, \bm{\Xi}_{T}$, forecasting horizon $D$ and game length $T$ (such that $T$ is a multiple of $D$) we have

$$R_{T}^{D}(s_{D, T})\geq D\cdot R_{T/D}^{1}(s_{1, T/D}).$$
\label{optimality-theorem}
\end{theorem}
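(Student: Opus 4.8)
I would prove the slightly stronger statement that $R_{T}^{D}(s)\geq D\cdot R_{T/D}^{1}(s_{1,T/D})$ for \emph{every} $s\in S_{D,T}$, the theorem being the case where $s$ is the optimal strategy $s_{D,T}$. The approach is a reduction: from an arbitrary $D$-step strategy $s$ I build a $1$-step strategy $s'$ for game length $T/D$ (with the same pool of $N$ experts) for which $R_{T/D}^{1}(s')\leq \frac{1}{D}R_{T}^{D}(s)$. Since $R_{T/D}^{1}(s_{1,T/D})=\min_{s''\in S_{1,T/D}}R_{T/D}^{1}(s'')$ is the minimal worst-case regret over $S_{1,T/D}$, this forces $R_{T/D}^{1}(s_{1,T/D})\leq \frac{1}{D}R_{T}^{D}(s)$, which is the claim.

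The key device is \emph{replication onto the grids}. For a $1$-step game $I$ of length $T/D$, let $I^{\otimes D}$ denote the $D$-step game of length $T$ obtained by putting an identical copy of $I$ on each grid $GR_{d}$: set $\omega_{d+(k-1)D}:=\omega^{I}_{k}$ and let expert $n$ forecast $\xi^{I,n}_{k}$ at time $d+(k-1)D$. Running $s$ on $I^{\otimes D}$ yields, at each step $t=d+(k-1)D$, a prediction distribution $\pi^{s,(d)}_{k}$ on $\Gamma$, and I define $s'$ to output, at step $k$ of the $1$-step game, the mixture $\frac{1}{D}\sum_{d=1}^{D}\pi^{s,(d)}_{k}$. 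The step I expect to be the main obstacle is showing that $s'$ is a \emph{legitimate} online $1$-step strategy, i.e. that this distribution depends only on $(\omega^{I}_{1},\dots,\omega^{I}_{k-1})$ and $(\bm{\xi}^{I}_{1},\dots,\bm{\xi}^{I}_{k})$. For this I would unpack the information $I_{t-D}=(\Omega_{d+(k-2)D},\bm{\Xi}_{d+(k-1)D})$ available to $s$ at time $t=d+(k-1)D$; a short index computation shows that on every grid this comprises outcomes only up to grid-step $k-1$ and expert forecasts only up to grid-step $k$, so, since all grids carry the same data $I$, it is exactly what a $1$-step strategy may use at step $k$. (This is where a naive ``treat the $D$ grids as $D$ separate adversarial games'' argument breaks down, because $s$'s prediction on grid $d$ does look at the other grids.)

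The rest is bookkeeping. Splitting the loss of $s$ on $I^{\otimes D}$ grid by grid gives $H_{T}^{D}(s,I^{\otimes D})=\sum_{d=1}^{D}\sum_{k}\mathbb{E}_{\pi^{s,(d)}_{k}}\lambda(\omega^{I}_{k},\gamma)$, and by linearity of expectation the mixture strategy's loss is the grid average, $H_{T/D}^{1}(s',I)=\frac{1}{D}H_{T}^{D}(s,I^{\otimes D})$. Because every grid carries the \emph{same} game $I$, each expert's total loss on $I^{\otimes D}$ is $D$ times its loss on $I$, whence $\min_{n}L_{T}^{n}(I^{\otimes D})=D\min_{n}L_{T/D}^{n}(I)$ and, crucially, the \emph{same} expert is optimal on all grids (this is precisely what replicating one game, rather than using $D$ different ones, secures). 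Combining,
\begin{align*}
R_{T/D}^{1}(s',I)&=\frac{1}{D}H_{T}^{D}(s,I^{\otimes D})-\min_{n}L_{T/D}^{n}(I)\\
&=\frac{1}{D}\Big(R_{T}^{D}(s,I^{\otimes D})+D\min_{n}L_{T/D}^{n}(I)\Big)-\min_{n}L_{T/D}^{n}(I)=\frac{1}{D}R_{T}^{D}(s,I^{\otimes D})\leq \frac{1}{D}R_{T}^{D}(s).
\end{align*}
Since $I$ was an arbitrary $1$-step game of length $T/D$, taking the supremum over $I$ gives $R_{T/D}^{1}(s')\leq \frac{1}{D}R_{T}^{D}(s)$, and hence $R_{T/D}^{1}(s_{1,T/D})\leq R_{T/D}^{1}(s')\leq \frac{1}{D}R_{T}^{D}(s)$. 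Taking $s=s_{D,T}$ yields $R_{T}^{D}(s_{D,T})\geq D\cdot R_{T/D}^{1}(s_{1,T/D})$. The hypothesis that $D$ divides $T$ is used only so that all $D$ grids have the common length $T/D$; the matching upper bound, hence equality, is obtained by running $s_{1,T/D}$ independently on each grid.
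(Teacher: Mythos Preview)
Your proposal is correct and follows essentially the same route as the paper: replicate a $1$-step game of length $T/D$ into a $D$-step game of length $T$ by copying the same outcome and expert forecasts across each block of $D$ consecutive steps, average the $D$-step strategy's predictions within a block to obtain a $1$-step strategy, and use linearity to get $H^{D}_{T}=D\,H^{1}_{T/D}$ and $L^{n}_{T}=D\,L^{n}_{T/D}$, hence $R^{D}_{T}=D\,R^{1}_{T/D}$ on these instances, from which the minimax inequality follows. Your treatment is in fact a bit more careful than the paper's in verifying that the averaged predictor is a legitimate online $1$-step strategy, and your observation that the argument works for every $s\in S_{D,T}$ (not only the optimal one) is correct.
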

\begin{proof}
Let $s=s_{D,T}^{*}\in S_{D, T}$ be the optimal strategy. We define new one-step ahead forecasting strategy $s''\in S_{1, T}$ based on $s$. Let

$$\pi^{s''}_{t}=\frac{1}{D}\sum_{\tau=[\frac{t}{D}]D+1}^{[\frac{t}{D}]D+D}\pi_{\tau}^{s}=\frac{1}{D}\sum_{\tau=[\frac{t}{D}]D+1}^{[\frac{t}{D}]D+D}\pi^{s}(\gamma|I_{\tau-D}).$$
For every one-step ahead forecasting game $I_{T/D}'=(\Omega_{T/D}', \bm{\Xi}_{T/D}')$ we create a new $D$-step ahead forecasting game $I''_{T}=(\Omega_{T}'', \bm{\Xi}_{T}'')$. We set $\omega_{t}''=\omega'_{[t/D]+1}$ and $(\xi_{t}^{n})''=(\xi_{[t/D]+1}^{n})'$ for all $t=1,2,\dots,T$ and $n\in\mathcal{N}$.

The last step is to define one-step ahead forecasting strategy $s'$ for one-step ahead forecasting game $I_{T/D}'$. We set $\pi^{s'}_{t}=\pi_{(t-1)D+1}^{s''}.$

We compute the loss of the algorithm $s$ on $I''_{T/D}$.

\begin{eqnarray}
H^{D}_{T}(s, I_{t}'')=\sum_{t=1}^{T}\mathbb{E}_{\pi_{t}^{s}}\big[\lambda(\omega_{t}'', \gamma)\big]=
\nonumber
\\
\sum_{t=1}^{T}\mathbb{E}_{\pi_{t}^{s}}\big[\lambda(\omega_{[t/D]+1}', \gamma)\big]= 
\sum_{t=1}^{T/D}\bigg[\sum_{\tau=1}^{D}\mathbb{E}_{\pi_{(t-1)D+\tau}^{s}}
\big[\lambda(\omega_{t}'', \gamma)\big]\bigg]=
\nonumber
\\
\sum_{t=1}^{T/D}\bigg[D\cdot \mathbb{E}_{\pi_{(t-1)D+1}^{s''}}\big[\lambda(\omega_{t}'', \gamma)\big]\bigg]=
D\sum_{t=1}^{T/D}\bigg[\mathbb{E}_{\pi_{(t-1)D+1}^{s''}}\big[\lambda(\omega_{t}'', \gamma)\big]\bigg]=
\nonumber
\\
=D\sum_{t=1}^{T/D}\bigg[\mathbb{E}_{\pi_{t}^{s'}}\big[\lambda(\omega_{t}'', \gamma)\big]\bigg]=
D\cdot H^{1}_{T/D}(s', I_{T/D}')
\nonumber
\end{eqnarray}
Also note that 
\begin{equation}
L_{T}^{n}(I_{T}'')=D\cdot L_{T/D}^{n}(I_{T/D}').
\label{loss-transition-eq}
\end{equation}
Thus, for every $I_{T/D}'$ we have 

$$R^{D}_{T}(s, I_{t}'')=D\cdot R^{1}_{T/D}(s', I_{T/D}').$$

According to the definition of the minimax regret, for the one-step forecasting game of length $T/D$ there exists such $I_{T/D}'$ that

$$R^{1}_{T/D}(s', I_{T/D}')\geq R_{T/D}^{1}(s_{1, T/D})$$.

For $I_{T/D}'$ and the corresponding sequence we obtain

$$R^{D}_{T}(s)\geq R^{D}_{T}(s, I_{t}'')=D\cdot R^{1}_{T/D}(s', I_{T/D}')\geq D\cdot R_{T/D}^{1}(s_{1, T/D})$$
This ends the proof.
\end{proof}

In fact, from Theorem \ref{optimality-theorem} we can conclude that an optimal aggregating algorithm $A_{D}^{*}$ for the long-term forecasting with experts' advice and competing with the best expert can be obtained by a simple replicating technique from the optimal algorithm $A_{1}^{*}$ for the $1$-step ahead forecasting and competing with the best expert.

However, if the goal is not to compete with the best expert (but, for example, to compete with the best alternating sequence of experts with no more than K switches or some other limited sequence), this theorem may also work. All the computations still remain true in the general case, except for \eqref{loss-transition-eq}. This equality should be replaced (if possible) by the analogue.

\end{document}